\pdfoutput=1
\PassOptionsToPackage{table}{xcolor}
\documentclass{article}
\usepackage{iclr2027_conference,times}

\iclrfinalcopy

\usepackage{hyperref}
\usepackage{multirow}
\usepackage{float}
\usepackage{placeins}
\usepackage[table]{xcolor}
\usepackage{graphicx}
\usepackage{caption}
\usepackage{subcaption}
\usepackage{placeins}

\usepackage{booktabs}
\usepackage{amsmath,amsthm,amsfonts,amssymb,mathtools,bm}
\usepackage{colortbl}
\usepackage{dcolumn}
\usepackage{algorithm}
\usepackage{algpseudocode}
\usepackage{tikz}
\usepackage{comment}

\newcommand{\email}[1]{\texttt{#1}}

\usepackage{amsmath,amsfonts,bm}

\def\1{\bm{1}}

\def\mI{{\bm{I}}}

\def\mU{{\bm{U}}}

\DeclareMathAlphabet{\mathsfit}{\encodingdefault}{\sfdefault}{m}{sl}
\SetMathAlphabet{\mathsfit}{bold}{\encodingdefault}{\sfdefault}{bx}{n}

\makeatletter
\DeclareRobustCommand\onedot{\futurelet\@let@token\@onedot}
\def\@onedot{\ifx\@let@token.\else.\null\fi}

\makeatother

\graphicspath{{figures/}}

\newcolumntype{d}{D{.}{.}{3.2}}
\makeatletter
\newcolumntype{B}{>{\boldmath\DC@{.}{.}{3.2}}c<{\DC@end}}
\makeatother

\newcommand{\tbox}[2][c]{\begin{tabular}[c]{@{}#1@{}}#2\end{tabular}}

\newcommand{\thead}[1]{\multicolumn{1}{c}{\textbf{#1}}}

\newcommand{\shadowkv}{\textsc{ShadowKV}}

\DeclarePairedDelimiter{\parens}{\lparen}{\rparen}

\DeclarePairedDelimiter{\norm}{\|}{\|}

\newcommand{\hpad}{\hspace*{\fill}}
\newcommand{\subplotsizeslack}{1}
\NewDocumentCommand{\subplot}{ O{1} O{} m m m }{%
    \hpad%
    \begin{subfigure}{(\textwidth / #1) * \subplotsizeslack}
        \includegraphics[width=\textwidth, #2]{#3}
        \caption{#4}\label{#5}
    \end{subfigure}%
}

\usepackage{cleveref}

\newtheorem{proposition}{Proposition}

\newcommand{\methodverb}{{\emph{iS}-KV}}
\newcommand{\method}{\methodverb}
\newcommand{\hto}{\textsc{H2O}}
\newcommand{\snapkv}{\textsc{SnapKV}}
\newcommand{\streaming}{\textsc{StreamingLLM}}
\newcommand{\ruler}{\textsc{RULER}}

\newcommand{\drift}{\ensuremath{\delta}}

\title{\textsc{\method{}}: Online Low-Rank KV Cache\\
Compression via Block-Incremental SVD}

\author{%
Yiren Zhao\textsuperscript{1}\quad
Guanghui Song\textsuperscript{1}\quad
Tianrui Qin\textsuperscript{2}\\
\bfseries Kejiang Ye\textsuperscript{3}\quad
Cheng-zhong Xu\textsuperscript{4}\quad
Xitong Gao\textsuperscript{3,5}\thanks{Corresponding author: \email{xt.gao@siat.ac.cn}.}\\[4pt]
\normalfont
\textsuperscript{1}The Hong Kong University of Science and Technology (Guangzhou)\\
\textsuperscript{2}Guangdong OPPO Mobile Telecommunications Corp., Ltd.\\
\textsuperscript{3}Shenzhen Institutes of Advanced Technology, Chinese Academy of Sciences\\
\textsuperscript{4}University of Macau\\
\textsuperscript{5}Shenzhen University of Advanced Technology
}

\begin{document}

\raggedbottom

\maketitle
\fancyhead{}
\renewcommand{\headrulewidth}{0pt}

\begin{abstract}
Long chain-of-thought reasoning substantially increases KV-cache memory during autoregressive decoding, as every generated token introduces new key and value states and causes the cache to grow linearly with decoding length.
Existing KV-cache compression methods typically control this growth through token eviction, but irreversible deletion can remove historical states that later reasoning may need to revisit.
SVD-based low-rank compression provides an alternative by retaining all positions with a more compact representation.
However, extending it from a fixed prompt cache to online decoding is non-trivial.
Through our investigation, we find that if the basis is updated for new tokens while old tokens keep their coordinates in the old basis, the stored history drifts substantially.
Based on this observation, we propose \textbf{\method{}}, an online low-rank KV-cache compression method for long-horizon reasoning.
\method{} keeps a recent window exact while incrementally folding older states into bounded-rank representations.
As the low-rank basis evolves, it synchronizes historical coordinates with the updated basis to maintain representation consistency.
On DeepSeek-R1-Distill-Llama-8B, \method{} achieves 82.6\% accuracy at 4.06\(\times\) persistent-KV compression, close to the original model's 83.6\%.
On Qwen3-8B, it achieves 89.2\% accuracy at 5.64\(\times\) compression.
Under matched memory budgets, \method{} \mbox{consistently outperforms token-eviction baselines.}
\end{abstract}

\section{Introduction}\label{sec:intro}

Long chain-of-thought (CoT) reasoning increases the memory needed during generation.
Each generated token adds key and value states to the KV cache, so a long
reasoning trace can require substantially more KV cache storage than its prompt
\citep{guo2025deepseek,yang2025qwen3technicalreport}.
The challenge is to reduce the KV storage cost during the decoding-phase
without losing information that later reasoning may need.

Two observations motivate our design. First, old tokens can become important
again. A token that receives little attention at the current step may receive
strong attention thousands of tokens later. Once evicted, it
cannot be recovered. On Qwen3-8B, for example, R-KV reduces
MATH-500 accuracy from 94.4\% to 68.8\% at about \(5.6\times\) compression.
Second, KV states have strong low-rank structure, so old tokens
can be compressed instead of deleted. Doing this online requires
care: when the low-rank basis changes as new tokens arrive, the coordinates of
previously compressed tokens must change with it. Otherwise, the stored
history no longer represents the same states.

Token-eviction methods reduce cache size by keeping selected positions and
removing the rest, typically using attention scores, recency, or redundancy
\citep{zhang2023h2oheavyhitteroracleefficient,xiao2024efficientstreaminglanguagemodels,cai2026r}.
These signals describe which tokens appear important under the queries seen so
far, but they cannot predict all future queries. \Cref{fig:motivation}(a)
illustrates this issue on long reasoning traces from
DeepSeek-R1-Distill-Llama-8B and Qwen3-8B on MATH-500, AIME 2024, and AIME
2025. At decoding step 1024, we mark the 75\% of tokens with the lowest recent
attention, without removing them. As decoding continues, many of these tokens
become important again. On AIME, about 30\% receive strong attention again
within the next 8K tokens, and this fraction grows to about 60\% by 30K
tokens. At matched memory, actual eviction by R-KV loses 5.4--11.0 points on
DeepSeek-R1-Distill-Llama-8B and 23.0--33.6 on Qwen3-8B (\Cref{tab:main-results}).
\begin{figure*}[!t]
  \centering
  \includegraphics[width=\textwidth]{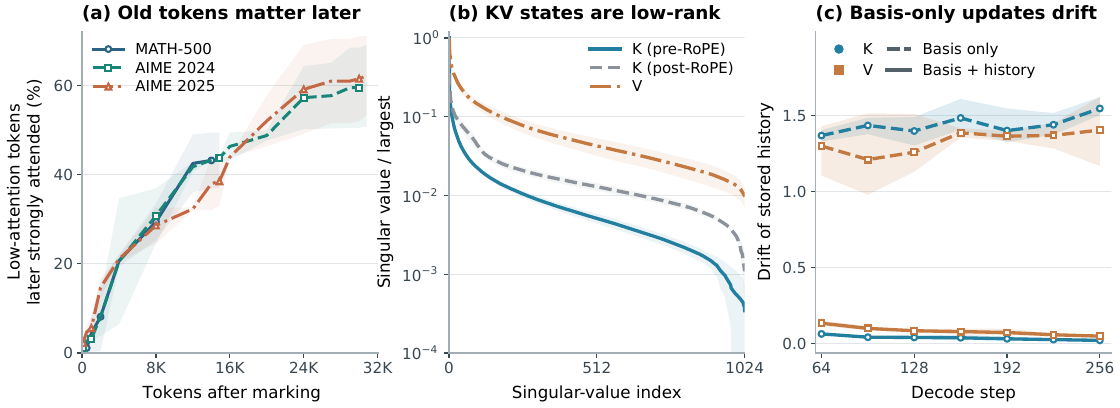}
  \caption{\textbf{Keep old positions, compress their KV states, and update them together.}
  \emph{(a)} At step 1024, we mark the 75\% least-attended tokens (none are
  removed) and plot the share that later receives strong attention.
  \emph{(b)} Singular values of generated K/V states, divided by the largest.
  \emph{(c)} Drift of the stored history caused by each update. Dashed: the
  basis is updated but old coordinates are kept. Solid: the basis and old
  coordinates are updated jointly, which equals the \method{} update in exact
  arithmetic. Bands show observed ranges; \Cref{app:fig1} gives all settings.}
  \label{fig:motivation}
\end{figure*}

Low-rank compression offers a different approach: keep every token position,
but store each old state more compactly. Each position is represented by a
short coefficient vector over a basis shared across positions.
\Cref{fig:motivation}(b) shows that generated KV states are indeed low-rank.
Keys have the fastest singular-value decay before rotary positional embeddings
(RoPE), consistent with prior work \citep{sun2025shadowkvkvcacheshadows},
while values decay more slowly and are harder to compress. The main difficulty appears during online decoding. As new states arrive, the
low-rank basis must adapt. But the coefficients of old states were computed in
the previous basis. If the basis is updated while the old coefficients remain
unchanged, the reconstructed historical states change as well.
In other words, the same coefficients represent different vectors under a new basis.
\Cref{fig:motivation}(c) measures this drift. At step 256, updating only the
basis gives a drift of 1.55 for keys and 1.40 for values. Updating the old
coefficients together with the basis reduces these values to 0.023 and 0.051.
This result shows that adapting the basis and preserving old states must be
handled together.

We therefore propose \method{}, an online low-rank KV-cache compression method
designed for long-horizon reasoning. \method{} keeps recent states at their
original precision and folds older states into separate low-rank K
and V representations. Using block incremental SVD \citep{Brand2006may}, it
updates both the low-rank basis and the coefficients of all previously
compressed positions. Keys are compressed before RoPE, where their low-rank structure is stronger.
During attention, the reconstructed historical states are combined with the
recent exact states. Every token position thus remains available to
future queries, although older states are approximate. A fixed
rank reduces the storage per old token, so \method{} slows cache growth rather
than making it constant in size.

On MATH-500, \method{} achieves 82.6\% accuracy at \(4.06\times\)
persistent-KV compression on DeepSeek-R1-Distill-Llama-8B, close to the
original model's 83.6\%. On Qwen3-8B, it achieves 89.2\% at \(5.64\times\)
compression, compared with 68.8\% for matched-memory R-KV. It
outperforms both R-KV and SnapKV at all six matched-memory operating points and
remains within 1.0--6.6 accuracy points of the original model. On AIME 2024 at
rank 192, \method{} on Qwen3-8B matches the original model with 23 of 30
correct answers, compared with 13 for matched-memory R-KV. Memory
matching counts low-rank factors, exact entries, and R-KV's staging buffer at
the original model's output lengths. This
avoids counting shorter generations as compression gains.
\mbox{Our contributions are:}
\begin{itemize}
  \item \textbf{Online K/V compression.} We develop \method{} to retain every
  token position, keeping recent states exact while storing older states in
  bounded-rank form.
  \item \textbf{Consistent history updates.} We show that a basis-only update
  makes the stored history drift, and update the basis and old
  coefficients jointly via block incremental SVD.
  \item \textbf{Reasoning accuracy at matched memory.} On two 8B models,
  \method{} outperforms R-KV and SnapKV at all six matched MATH-500 operating
  points. AIME, RULER, ablation, and latency results further show its
  strengths and limitations.
\end{itemize}

\section{Related Work}\label{sec:related}

\paragraph{Token selection and eviction.}
Token-selection methods reduce KV cache storage requirement by keeping a subset of cached positions.
\hto{} retains heavy hitters and recent tokens \citep{zhang2023h2oheavyhitteroracleefficient}, while \streaming{} combines attention sinks with a sliding window \citep{xiao2024efficientstreaminglanguagemodels}.
FastGen selects head-specific policies through attention profiling \citep{ge2024model}.
For long prompts, \snapkv{} selects positions using a trailing observation window \citep{li2024snapkvllmknowslooking}, and PyramidKV allocates different cache budgets across layers \citep{cai2024pyramidkv}.
R-KV targets long reasoning with redundancy-aware selection \citep{cai2026r}.
These methods preserve selected states at the expense of historical coverage.
\method{} instead keeps every position in exact or low-rank form, trading representation precision rather than token availability.

\paragraph{Attention efficiency and cache quantization.}
Other approaches reduce different parts of the memory cost.
FlashAttention reduces attention IO \citep{dao2022flashattention}; PagedAttention reduces cache fragmentation and enables sharing \citep{kwon2023efficientmemorymanagementlarge}; GQA reduces the number of KV heads \citep{ainslie-etal-2023-gqa}.
Quantization reduces bits per element: KIVI uses per-channel keys and per-token values \citep{https://doi.org/10.13140/rg.2.2.28167.37282}, while KVQuant combines pre-RoPE key quantization with separate outlier storage \citep{hooper2025kvquant10millioncontext}.
Our focus is instead the number of coordinates stored per historical state, without changing the pretrained attention architecture.
We do not evaluate additional low-bit quantization or serving-system integration.

\paragraph{Low-rank KV representations.}
Low-rank methods exploit redundancy across features without discarding token positions.
Palu factorizes projection weights and caches low-dimensional intermediate states \citep{chang2024palu}; Eigen Attention performs attention in reduced spaces \citep{saxena-etal-2024-eigen}.
\shadowkv{} combines low-rank pre-RoPE keys, value offloading, and sparse retrieval \citep{sun2025shadowkvkvcacheshadows}.
More recent work learns bases against decoder-layer output error in StiefAttention \citep{benfenati2026don}, adapts head/block ranks in STAR-KV \citep{bhatnagar2026star}, and benchmarks data-dependent compressibility in KV-CoRE \citep{chen2026kv}.
These studies inform basis construction and rank choice.
Our question is how to incorporate newly decoded states while keeping historical coordinates consistent with an evolving basis.

\paragraph{Online low-rank updates.}
For newly generated states, \method{} maintains a shared historical factorization for each K/V path and updates its basis and old coordinates together.
The update uses Brand's classical incremental SVD \citep{Brand2002may,Brand2006may}, not a new matrix-decomposition primitive.
Its small core represents the stored history and incoming block before rank truncation (\Cref{prop:block-exact}).
Our contribution is this coordinated cache update with exact recent states, evaluated on long reasoning at matched persistent-memory budgets.

\suppressfloats[t]
\section{\method{}}\label{sec:method}

\begin{figure}[t]
  \centering
  \includegraphics[width=\textwidth]{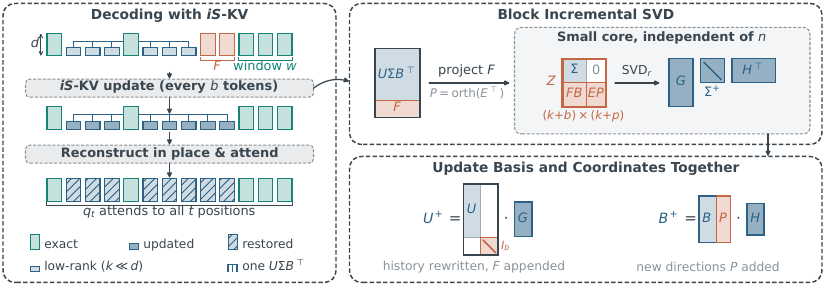}
  \caption{Overview of \method{}.
    The KV cache grows
    while \method{} keeps the recent window of \(w\) tokens exact (teal)
    and folds older tokens into a compact low-rank history (blue),
    one \(U\Sigma B^{\top}\) per block.
    Every \(b\) tokens, the pending block \(F\) (orange) is merged in
    by a Block Incremental SVD that updates basis and coordinates
    together and solves only a small core,
    independent of the history length \(n\).
    Each position can be reconstructed
    for query \(q_t\) attends to all \(t\) positions.
  }\label{fig:method-overview}
\end{figure}

\method{} compresses the growing KV cache while retaining a representation of every token position.
Prefill initializes exact states and a low-rank history; decoding jointly updates the basis and historical coefficients as unprotected rows are incorporated in blocks.
\Cref{fig:method-overview} illustrates the workflow, with prefill and decoding specified in \Cref{alg:prefill,alg:decode} in \Cref{app:algorithms}.

\subsection{Objective and Prefill Compression}
\label{sub:method:setup}\label{sub:method:prompt}

Consider a single Transformer layer processing a prompt of \(L\) tokens, with \(H_{\mathrm{kv}}\) KV heads of dimension \(D\).
Concatenating the KV heads along the feature dimension yields key and value matrices \(K,V\in\mathbb R^{L\times d}\), where \(d=H_{\mathrm{kv}}D\) and each row corresponds to one token.
Following \citet{sun2025shadowkvkvcacheshadows}, we represent keys before RoPE (pre-RoPE keys).
For \(X\in\{K,V\}\), we partition the prompt positions \(\{1,\ldots,L\}\) into an exact set \(\mathcal E_X\) and a compressed set \(\mathcal C_X\).
Let \(X_{\mathcal C_X}:=X[\mathcal C_X]\in\mathbb R^{n_X\times d}\) denote the original submatrix selected for compression, where \(n_X=|\mathcal C_X|\).
\method{} stores \(X[\mathcal E_X]\) unchanged at its original precision and represents \(X_{\mathcal C_X}\) by a low-rank approximation \(\widehat X_{\mathcal C_X}\) of the same dimensions.
We retain the original position indices for reconstruction.

Given this partition and a rank budget \(r_X\), \method{} chooses \(\widehat X_{\mathcal C_X}\) to minimize the squared reconstruction error in the Frobenius norm:
\begin{equation}
 {\min}_{\widehat X_{\mathcal C_X}\in\mathbb{R}^{n_X \times d}}
 \quad \norm{X_{\mathcal C_X}-\widehat X_{\mathcal C_X}}_F^2
 \quad \emph{st.}
 \quad \operatorname{rank}(\widehat X_{\mathcal C_X})\le r_X.
\label{eq:cache-objective}
\end{equation}
The truncated singular value decomposition (SVD), denoted by \((U_X,\Sigma_X,B_X)=\operatorname{SVD}_{r_X}(X_{\mathcal C_X})\), retains the leading \(k_X=\min(r_X,n_X,d)\) singular components.
Here, \(U_X\in\mathbb R^{n_X\times k_X}\) and \(B_X\in\mathbb R^{d\times k_X}\) contain the retained left and right singular vectors, with orthonormal columns; \(\Sigma_X\in\mathbb R^{k_X\times k_X}\) is diagonal with the retained singular values.
The cache stores \(U_X\), \(B_X\), and the diagonal of \(\Sigma_X\); each row of \(U_X\Sigma_X\) contains a token's coefficients in the shared basis \(B_X\).

We determine which rows remain exact during prefill, whose attention uses the original keys and values.
For keys, we retain a short local suffix of the prompt, including any incomplete chunk, and partition the remaining prefix into fixed-size contiguous chunks.
For chunk \(c\) and KV head \(h\), let \(k^{\mathrm{rot}}_{i,h}\in\mathbb R^D\) denote the post-RoPE key at position \(i\), and \(\bar k^{\mathrm{rot}}_{c,h}\) the mean key in that chunk.
Following the outlier selection criterion of \citet{sun2025shadowkvkvcacheshadows}, we score each chunk by
\(
 s_{c,h}=\min_{i\in c}\cos\!\parens{k^{\mathrm{rot}}_{i,h},\bar k^{\mathrm{rot}}_{c,h}}
\).
We select up to \(m\) lowest-scoring chunks per KV head as outlier chunks; the union of their token positions across heads, together with the local suffix, defines the protected positions \(\mathcal P_K\).
Selection uses post-RoPE keys, but each protected position retains its full pre-RoPE key row across all KV heads.

For values, we define the protected prompt positions \(\mathcal P_V\) as all prompt positions within \(\rho\) tokens of any position in \(\mathcal P_K\), including \(\mathcal P_K\) itself.
We initialize an exact recent window of size \(w\) with the last \(\min(w,L)\) prompt values and denote their positions by \(W_V\).
The initial exact sets are therefore \(\mathcal E_K=\mathcal P_K\) and \(\mathcal E_V=\mathcal P_V\cup W_V\).
We factorize the remaining rows with the truncated SVD above.
The protection sets \(\mathcal P_K,\mathcal P_V\) remain fixed during decoding, while the recent window advances with generation.
Protection may cover an entire short prompt, leaving an empty low-rank history.
The chunk size and protection parameters are specified in \Cref{app:impl}.

\begin{samepage}
\subsection{Block-Incremental Decoding}
\label{sub:method:update}\label{sub:method:overview}\label{sub:method:recent}

During decoding, new key and value rows remain exact in recent windows of size \(w\).
Unprotected rows leaving these windows remain exact; their positions enter FIFO queues \(\mathcal Q_K\) for keys and \(\mathcal Q_V\) for values.
For \(X\in\{K,V\}\), \(\mathcal Q_X\) denotes the corresponding queue.
Regular updates consume the oldest \(b\) pending rows once \(|\mathcal Q_X|\ge b\), retaining at most \(r_X\) singular components.
Before regular value updates begin, a one-time refit is performed: once \(|\mathcal Q_V|\ge r_V\), we append the oldest \(r_V\) pending rows to the reconstructed prompt value history and compute a truncated SVD with rank budget \(r_V\).
Thus, \(r_V\) controls both the value rank budget and the initial refit size.
\par
\end{samepage}

Each regular update represents the history and new block in a common updated basis.
Omitting the subscript \(X\), let \(A=U\Sigma B^\top\in\mathbb R^{n\times d}\) denote the maintained history with \(k\le r\) components and rank budget \(r\), and \(F\in\mathbb R^{b\times d}\) the next pending block.
We seek an approximation \(\widehat M\) of their concatenation \(M=[A;F]\):
\begin{equation}
 {\min}_{\widehat M\in\mathbb R^{(n+b)\times d}}
 \quad \norm{M-\widehat M}_F^2
 \quad \emph{st.}
 \quad \operatorname{rank}(\widehat M)\le r.
 \label{eq:local-objective}
\end{equation}
The target \(M\) contains the maintained approximation \(A\) and exact block \(F\); the original compressed rows are no longer stored.
An empty low-rank history is initialized by a truncated SVD of \(F\).
Otherwise, block incremental SVD \citep{Brand2006may} updates the factors without reconstructing \(A\).
To incorporate directions outside \(B\), we compute \(E=F-(FB)B^\top\) and construct a residual basis \(P\in\mathbb R^{d\times p}\) by QR and SVD (\Cref{app:residual-impl}).
We retain directions whose singular values exceed the numerical threshold, with \(p\le\min(b,d-k)\) and \(p=0\) when none are retained.
Queue occupancy determines update timing; residual thresholding determines basis expansion.
Expressing \(A\) and \(F\) in the augmented basis \([B\;P]\) yields
\begin{equation}
 M\approx\begin{bmatrix}U&0\\0&I_b\end{bmatrix}Z[B\;P]^\top,
 \qquad Z=\begin{bmatrix}\Sigma&0\\FB&FP\end{bmatrix},
 \label{eq:block-core}
\end{equation}
where \(I_b\) is the \(b\times b\) identity matrix.
This reduces the SVD to a \((k+b)\times(k+p)\) core, independent of the history length \(n\).
Computing \((G,\Sigma^+,H)=\operatorname{SVD}_r(Z)\), with \(Z\approx G\Sigma^+H^\top\), gives
\begin{equation}
 U^+=\begin{bmatrix}U&0\\0&I_b\end{bmatrix}G,
 \qquad B^+=[B\;P]H.
 \label{eq:sync-update}
\end{equation}
The updated factors reconstruct \(\widehat M^+=U^+\Sigma^+(B^+)^\top\).
The first \(n\) and last \(b\) rows of \(U^+\Sigma^+\) contain the coefficients of \(A\) and \(F\), respectively, in the updated basis \(B^+\).

In exact arithmetic, with orthonormal columns in \(U,B\) and a complete orthonormal residual basis \(P\), \eqref{eq:block-core} is exact and the core SVD solves \eqref{eq:local-objective}, with squared error \(\sum_{i>r}\sigma_i(Z)^2\).
Residual thresholding and finite precision introduce additional error.
We periodically reorthogonalize the factors using QR and SVD, preserving their product in exact arithmetic.
Numerical settings and proofs are given in \Cref{app:impl}.

\begin{samepage}
\subsection{Reconstruction and Storage Cost}\label{sub:method:complexity}

At each decoding step, \(\mathcal E_X\) and \(\mathcal C_X\) denote the current exact and compressed position sets, with \(\mathcal E_X\) containing protected, recent, and pending positions.
Let \(X_{\mathrm{exact}}[i]\) denote the stored exact row at position \(i\). The reconstructed cache \(\widetilde X\) is defined in token order by
\begin{equation}
 \widetilde X[i]=
 \begin{cases}
 (U_X\Sigma_XB_X^\top)[j], & i=\mathcal C_X[j],\\
 X_{\mathrm{exact}}[i], & i\in\mathcal E_X.
 \end{cases}
 \label{eq:cache-reconstruct}
\end{equation}
\end{samepage}
Reconstruction aligns keys and values by their original token positions despite their different compressed position sets.
After restoring head dimensions, we apply RoPE to keys at their original positions.
For a rotated query \(q_t^{\mathrm{rot}}\) associated with KV head \(h\), the attention output is
\begin{equation}
 o_t=\operatorname{softmax}\!\left(q_t^{\mathrm{rot}}(\widetilde K_h^{\mathrm{rot}})^\top/\sqrt D\right)\widetilde V_h.
 \label{eq:reconstructed-attention}
\end{equation}

Let \(n_X=|\mathcal C_X|\) and \(e_X=|\mathcal E_X|\) denote the numbers of compressed and exact rows.
The factors and exact rows use \(\beta_f\) and \(\beta_e\) bytes per scalar, respectively.
Excluding metadata and temporary dense reconstruction buffers, the per-layer persistent storage is
\begin{equation}\textstyle
 M_{\text{\method{}}}
 =\beta_f\sum_{X\in\{K,V\}}k_X(n_X+d+1)
 +\beta_e d\sum_{X\in\{K,V\}}e_X.
 \label{eq:physical-memory}
\end{equation}
At equal rank \(r<d\) and precision, each compressed key-value pair requires \(2r\) coefficients instead of \(2d\) scalars, plus shared factors.
Storage therefore grows as \(O(nr)\) with history length.
Updating historical coefficients costs \(O(nr(r+b))\) per block; reorthogonalization and reconstruction also depend on \(n\).

\section{Experiments}\label{sec:results}

We evaluate reasoning accuracy, key/value compression and basis adaptation, long-input retrieval, and generation time and memory.

\subsection{Long-CoT Reasoning}\label{sec:results:cot}

\paragraph{Models, data, and evaluation.}
Our main evaluation uses all 500 MATH-500 questions \citep{hendrycks2021measuring} with DeepSeek-R1-Distill-Llama-8B \citep{guo2025deepseek} and Qwen3-8B \citep{yang2025qwen3technicalreport}.
We report final-answer accuracy from one sampled generation per question, using temperature 0.6, top-\(p=0.95\), top-\(k=50\), and a total prompt-plus-generation limit of 16,384 tokens.
Prompts follow each model's chat template and request a boxed final answer; empty or non-finite outputs count as incorrect.
We compare with the original model and the token-eviction methods R-KV \citep{cai2026r} and SnapKV \citep{li2024snapkvllmknowslooking}.
The original model, R-KV, and \method{} use paired questions, prompts, question-specific seeds, and the same answer evaluator.
SnapKV results are retained from the existing baseline summaries; \Cref{tab:main-results} distinguishes their inferred correct counts from directly counted outcomes.

\paragraph{Compression settings and memory matching.}
For MATH-500, \method{} uses ranks \(r\in\{64,96,128\}\), a 128-token exact recent window on each path, and 64-row updates.
Persistent factors use FP16, while updates and reconstruction use FP32; further settings are given in \Cref{app:impl}.
We match baselines by persistent KV bytes, not by nominal rank or retained-token count.
The accounting includes protected prompt states, exact recent states, low-rank coefficients, singular values, bases, and baseline staging storage.
All footprints are evaluated at the original model's output lengths, so different stopping lengths do not change the reference used for matching.
These ratios exclude model weights and temporary reconstruction tensors and therefore do not measure peak GPU-memory savings.

\begin{table*}[t]
  \centering
  \caption{\textbf{MATH-500 accuracy at matched persistent-KV footprints.}
  The evaluation uses 500 questions and a 16K total-token limit.
  KV ratio and compression are measured against the original model at its output lengths, counting R-KV's
  128-token staging buffer and SnapKV's 128-token compression interval.
  Shading marks \method{}; $\Delta$ is in percentage points.
  SnapKV correct counts are inferred by rounding its reported accuracy times 500,
  not counted from per-question outputs; its reported percentages are not all
  compatible with single-sample, 500-question accuracy increments.}
  \label{tab:main-results}
  \resizebox{\textwidth}{!}{%
  \begin{tabular}{lcrrrrr}
    \toprule
    \thead{Method} & \thead{Setting} &
    \thead{KV ratio $\downarrow$} & \thead{Compression $\uparrow$} &
    \thead{Correct} & \thead{Accuracy $\uparrow$} &
    \thead{$\Delta$} \\
    \midrule
    \rowcolor{black!12}
    \multicolumn{7}{l}{\textit{DeepSeek-R1-Distill-Llama-8B}} \\
    Original & -- & 100.00\% & 1.000$\times$ & 418/500 & \textbf{83.6} & 0.0 \\
    R-KV & budget 748 & 24.63\% & 4.060$\times$ & 391/500 & 78.2 & $-5.4$ \\
    \snapkv{} & $B=636$ & 24.64\% & 4.059$\times$ & 351/500 & 70.1 & $-13.5$ \\
    \rowcolor{black!8}
    \method{} & $r=128$ & 24.63\% & 4.060$\times$ & 413/500 & \underline{82.6} & \underline{$-1.0$} \\
    \cmidrule(lr){1-7}
    R-KV & budget 562 & 20.78\% & 4.813$\times$ & 382/500 & 76.4 & $-7.2$ \\
    \snapkv{} & $B=517$ & 20.80\% & 4.807$\times$ & 337/500 & 67.4 & $-16.2$ \\
    \rowcolor{black!8}
    \method{} & $r=96$ & 20.79\% & 4.811$\times$ & 409/500 & \underline{81.8} & \underline{$-1.8$} \\
    \cmidrule(lr){1-7}
    R-KV & budget 409 & 16.89\% & 5.920$\times$ & 363/500 & 72.6 & $-11.0$ \\
    \snapkv{} & $B=396$ & 16.90\% & 5.917$\times$ & 322/500 & 64.3 & $-19.3$ \\
    \rowcolor{black!8}
    \method{} & $r=64$ & 16.90\% & 5.917$\times$ & 391/500 & \underline{78.2} & \underline{$-5.4$} \\
    \cmidrule(lr){1-7}
    \midrule
    \rowcolor{black!12}
    \multicolumn{7}{l}{\textit{Qwen3-8B}} \\
    Original & -- & 100.00\% & 1.000$\times$ & 472/500 & \textbf{94.4} & 0.0 \\
    R-KV & budget 797 & 21.35\% & 4.683$\times$ & 357/500 & 71.4 & $-23.0$ \\
    \snapkv{} & $B=794$ & 21.36\% & 4.683$\times$ & 407/500 & 81.30 & $-13.10$ \\
    \rowcolor{black!8}
    \method{} & $r=128$ & 21.36\% & 4.681$\times$ & 460/500 & \underline{92.0} & \underline{$-2.4$} \\
    \cmidrule(lr){1-7}
    R-KV & budget 638 & 17.73\% & 5.641$\times$ & 344/500 & 68.8 & $-25.6$ \\
    \snapkv{} & $B=637$ & 17.72\% & 5.644$\times$ & 366/500 & 73.17 & $-21.23$ \\
    \rowcolor{black!8}
    \method{} & $r=96$ & 17.72\% & 5.644$\times$ & 446/500 & \underline{89.2} & \underline{$-5.2$} \\
    \cmidrule(lr){1-7}
    R-KV & budget 479 & 14.06\% & 7.113$\times$ & 304/500 & 60.8 & $-33.6$ \\
    \snapkv{} & $B=479$ & 14.06\% & 7.113$\times$ & 325/500 & 65.04 & $-29.36$ \\
    \rowcolor{black!8}
    \method{} & $r=64$ & 14.05\% & 7.118$\times$ & 439/500 & \underline{87.8} & \underline{$-6.6$} \\
    \cmidrule(lr){1-7}
    \bottomrule
  \end{tabular}
  }
\end{table*}

\paragraph{MATH-500: higher accuracy at matched memory.}
\Cref{tab:main-results} shows that \method{} outperforms both eviction baselines at all six reported model--budget settings.
On DeepSeek-R1-Distill-Llama-8B, rank 128 reaches 82.6\% accuracy at \(4.06\times\) compression, compared with 83.6\% for the original model and 78.2\% for matched R-KV.
At \(4.81\times\) and \(5.92\times\) compression, the gains over R-KV are 5.4 and 5.6 percentage points, respectively.
The advantage also holds on Qwen3-8B: rank 96 achieves 89.2\% at \(5.64\times\) compression, compared with 68.8\% for R-KV and a reported 73.17\% for SnapKV.
Even at rank 64, Qwen3-8B retains 87.8\% accuracy with \(7.12\times\) compression, versus 94.4\% for the original model.

\paragraph{Accuracy degrades gradually as the rank decreases.}
Reducing rank from 128 to 64 lowers \method{} accuracy by 4.4 percentage points on DeepSeek-R1-Distill-Llama-8B and 4.2 points on Qwen3-8B.
Over the corresponding matched budgets, R-KV loses 5.6 and 10.6 points, respectively.
The advantage persists across ranks, but the growing gap to the original model shows that retaining positions alone cannot compensate for lost representation fidelity.

\paragraph{Comparison with OjaKV.}
We run the released OjaKV implementation \citep{zhu-etal-2026-ojakv} on DeepSeek-R1-Distill-Llama-8B and MATH-500 with the 16K total-token limit used above.
Its released checkpoint achieves 62.0\% accuracy at approximately \(1.30\times\) KV compression, compared with the original model's 83.6\% and \method{}'s 78.2--82.6\% at \(4.06\text{--}5.92\times\) compression (\Cref{tab:ojakv}).
We also reduce OjaKV's ranks to target the same compression range as \method{}.
All 118 generations collected across these three settings fall into repetitive loops and reach the length limit without EOS.
We stop these runs early and report the observed failures rather than an unmeasured accuracy.

\begin{table}[!htbp]
  \centering
  \caption{\textbf{OjaKV on DeepSeek-R1-Distill-Llama-8B and MATH-500.}
  The upper block uses all 500 questions at a 16K total-token limit.
  OjaKV uses its released checkpoint with error-budget parameter 0.075.
  The lower block reduces its aggregate rank to target higher compression; these runs were stopped early.
  ``--'' means not scored, not zero accuracy.}
  \label{tab:ojakv}
  \begin{tabular}{lrrl}
    \toprule
    Method / setting & Compression & Acc. (\%) & Observed results \\
    \midrule
    Original model & \(1.00\times\) & 83.6 & 418/500 correct \\
    OjaKV (released) & \(\approx 1.30\times\) & 62.0 & 310/500 correct \\
    \method{} (rank 128) & \(4.06\times\) & 82.6 & 413/500 correct \\
    \method{} (rank 96) & \(4.81\times\) & 81.8 & 409/500 correct \\
    \method{} (rank 64) & \(5.92\times\) & 78.2 & 391/500 correct \\
    \midrule
    \multicolumn{4}{l}{\textit{OjaKV with reduced ranks; compression values are targets}} \\
    OjaKV (rank 232) & \(4.06\times\) & -- & 30/30 loop to limit \\
    OjaKV (rank 192) & \(4.81\times\) & -- & 28/28 loop to limit \\
    OjaKV (rank 152) & \(5.92\times\) & -- & 60/60 loop to limit \\
    \bottomrule
  \end{tabular}
\end{table}

\paragraph{AIME: the advantage persists at a longer generation budget.}
We further evaluate Qwen3-8B on AIME 2024 and AIME 2025 with a 32,768-token total limit.
Each dataset contains 30 questions, with one sampled generation per question.
We match persistent KV memory at the original model's output lengths, including R-KV's 128-token staging buffer; the horizontal axis in \Cref{fig:qwen3-aime-matched-budget} gives the corresponding R-KV retained-token budget.
These runs store low-rank factors in BF16 to avoid the FP16 overflow observed at this length (\Cref{app:bf16}); both formats occupy two bytes per element.
As \Cref{fig:qwen3-aime-matched-budget} shows, \method{} exceeds R-KV at every tested budget on both datasets.
On AIME 2024, increasing rank from 64 to 192 raises accuracy from 43.3\% to 76.7\%, while matched R-KV rises from 13.3\% to 43.3\%.
At rank 192, \method{} answers 23 of 30 questions correctly, as does the original model.
AIME 2025 remains more difficult: the same rank achieves 56.7\%, above R-KV's 40.0\% but below the original model's 63.3\%.
With one sample per question, each answer changes accuracy by 3.3 percentage points; an equal total does not establish statistical equivalence.

\begin{figure}[t]
  \begin{minipage}[t]{0.485\textwidth}
    \centering
    \includegraphics[width=\linewidth]{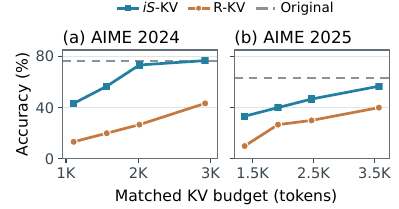}
    \caption{\textbf{AIME accuracy on Qwen3-8B.}
    Ranks 64, 96, 128, and 192 (left to right) use memory budgets matched to R-KV.
    Dashed lines show the original model.}
    \label{fig:qwen3-aime-matched-budget}
  \end{minipage}\hfill
  \begin{minipage}[t]{0.485\textwidth}
    \centering
    \includegraphics[width=\linewidth]{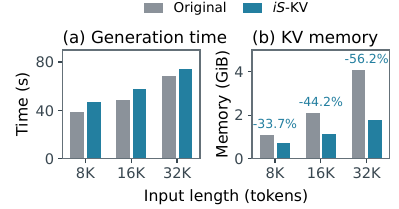}
    \caption{\textbf{Generation time and KV memory.}
    DeepSeek-R1-Distill-Llama-8B on an A40, rank 128: prefill plus 1,024 decode steps.
    Labels show KV savings over the original model.}
    \label{fig:latency}
  \end{minipage}
\end{figure}

\subsection{Ablation Study}\label{sec:results:ablation}

We ablate key/value compression and basis adaptation on a fixed prefix of 80 MATH-500 questions with DeepSeek-R1-Distill-Llama-8B, paired prompts and seeds, and one generation per question.
All compressed variants use rank 96 and the same exact-recent policy.
For the frozen-basis variant, both bases are initialized from the prompt and kept fixed during decoding.

\begin{table}[H]
  \centering
  \newcommand{\abcheck}{\textcolor{green!50!black}{\ensuremath{\checkmark}}}
  \newcommand{\abcross}{\textcolor{red!70!black}{\ensuremath{\times}}}
  \caption{\textbf{Component ablation on a fixed 80-question MATH subset.}
  Checks/crosses enable/disable each component; frozen bases remain at their prompt initialization.
  Compressed variants share rank 96 and the exact-recent policy, with one sample per question.
  Compression is the persistent-KV compression ratio.}
  \label{tab:kv-ablation}
  \resizebox{\textwidth}{!}{%
  \begin{tabular}{lcccrrr}
    \toprule
    \thead{Variant} & \thead{Low-rank K} & \thead{Low-rank V} &
    \thead{Online update} & \thead{Compression} & \thead{Correct} & \thead{Accuracy (\%)} \\
    \midrule
    Original & \abcross & \abcross & \abcross & 1.00$\times$ & 67/80 & 83.8 \\
    K-only & \abcheck & \abcross & \abcheck & 1.64$\times$ & \textbf{68/80} & \textbf{85.0} \\
    V-only & \abcross & \abcheck & \abcheck & 1.64$\times$ & 65/80 & 81.2 \\
    Frozen basis & \abcheck & \abcheck & \abcross & \textbf{4.68$\times$} & 51/80 & 63.7 \\
    \midrule
    \method{} & \abcheck & \abcheck & \abcheck & 4.61$\times$ & 65/80 & 81.2 \\
    \bottomrule
  \end{tabular}}
\end{table}

\paragraph{Key and value compression have different effects.}
In \Cref{tab:kv-ablation}, K-only compression answers 68/80 questions correctly, compared with 67/80 for the original model.
V-only compression and joint K+V compression each answer 65/80 correctly.
Value compression appears more sensitive on this subset, with no further aggregate loss from compressing keys.
Equal totals need not imply identical errors, nor does the one-question gain establish an improvement over the original model.
This comparison fixes per-path rank, not total memory.

\paragraph{Online basis adaptation matters.}
Keeping the bases fixed reduces the score from 65/80 to 51/80, a 17.5-percentage-point drop under the same rank and recent-window settings.
This supports adapting beyond the prompt-only subspace.
The answer-level ablation complements \Cref{fig:motivation}(c), which instead measures reconstruction drift from unsynchronized historical coordinates.

\subsection{Long Prefill Evaluation}\label{sec:results:ruler}

\paragraph{Setup.}
We evaluate 11 \ruler{} tasks \citep{hsieh2024rulerwhatsrealcontext} at 64K context on GLM-4-9B-1M, Llama-3-8B-Instruct-Gradient-1048k, and Llama-3.1-8B, using ten examples per task.
\method{} and the original model use identical inputs; we report task-specific percentage scores and their unweighted mean.
At initialization, \method{} retains selected prompt chunks and protected neighborhoods at their original precision and factorizes the remaining states, as detailed in \Cref{app:impl}.
With short outputs, this primarily tests prompt-history access under the two configurations in \Cref{tab:ruler-frozen}.

\begin{table}[!htbp]
  \centering
  \caption{\textbf{Prefill evaluation on 11 \ruler{} tasks at 64K context.}
  Scores are percentages, with ten examples per task; Avg.\ is the task mean.
  S1--S3 denote single-needle retrieval; MK, MQ, and MV denote multi-key,
  multi-query, and multi-value retrieval; VT, FWE, and QA denote variable
  tracking, frequent-word extraction, and question answering.
  Rank 96 uses 128 exact-recent rows and 16-row folds.
  Rank 160 uses single-row \(K\) updates with slack 8 and a 512-row exact-recent
  \(V\) window. These are different configurations, not a rank-only ablation.
  Comp. is compression ratio.}
  \label{tab:ruler-frozen}
  \resizebox{\textwidth}{!}{%
  \begin{tabular}{llrrrrrrrrrrrrr}
    \toprule
    \thead{Model} & \thead{Method} & \thead{S1} & \thead{S2} & \thead{S3} &
    \thead{MK1} & \thead{MK2} & \thead{MQ} & \thead{MV} &
    \thead{VT} & \thead{FWE} & \thead{QA1} & \thead{QA2} & \thead{Avg.} &
    \thead{Comp. \(\uparrow\)} \\
    \midrule
    \multirow{3}{*}{GLM-4-9B-1M}
      & Original & 100.0 & 100.0 & 100.0 & 100.0 & 100.0 & 100.0 & 98.8 &
        100.0 & 78.2 & 72.4 & 36.5 & 89.6 & 1.00$\times$ \\
      & \method{} (rank 160) & 100.0 & 100.0 & 100.0 & 100.0 & 100.0 & 100.0 & 85.7 &
        100.0 & 71.4 & 72.4 & 36.5 & 87.8 & 2.67$\times$ \\
      & \method{} (rank 96) & 100.0 & 100.0 & 100.0 & 100.0 & 95.2 & 92.7 & 82.0 &
        100.0 & 71.4 & 72.4 & 36.5 & 86.4 & 3.89$\times$ \\
    \cmidrule(lr){1-15}
    \multirow{3}{*}{\tbox[l]{Llama-3-8B- \\ Instruct-Gradient- \\ 1048k}}
      & Original & 100.0 & 100.0 & 100.0 & 100.0 & 100.0 & 94.6 & 93.6 &
        80.6 & 80.4 & 71.1 & 39.6 & 87.3 & 1.00$\times$ \\
      & \method{} (rank 160) & 100.0 & 100.0 & 100.0 & 100.0 & 100.0 & 88.1 & 92.1 &
        80.6 & 80.4 & 69.0 & 39.6 & 86.3 & 2.29$\times$ \\
      & \method{} (rank 96) & 100.0 & 100.0 & 100.0 & 100.0 & 100.0 & 62.0 & 83.8 &
        80.6 & 85.1 & 69.0 & 39.6 & 83.6 & 2.54$\times$ \\
    \cmidrule(lr){1-15}
    \multirow{3}{*}{Llama-3.1-8B}
      & Original & 100.0 & 100.0 & 100.0 & 100.0 & 61.3 & 100.0 & 100.0 &
        94.0 & 76.6 & 58.3 & 51.5 & 85.6 & 1.00$\times$ \\
      & \method{} (rank 160) & 100.0 & 100.0 & 100.0 & 80.7 & 62.3 & 79.7 & 68.8 &
        94.0 & 76.0 & 56.2 & 51.5 & 79.0 & 2.23$\times$ \\
      & \method{} (rank 96) & 100.0 & 100.0 & 85.7 & 69.8 & 62.3 & 46.6 & 41.6 &
        92.6 & 87.4 & 56.2 & 46.7 & 71.7 & 2.47$\times$ \\
    \bottomrule
  \end{tabular}%
  }
\end{table}

\paragraph{Access to long input histories.}
Rank 160 matches the original models' 100.0 on S1--S3.
Its means are 87.8, 86.3, and 79.0 on GLM-4-9B-1M, Llama-3-8B-Instruct-Gradient-1048k, and Llama-3.1-8B, versus 89.6, 87.3, and 85.6 for the original models.
Rank 96 retains 100.0 on S1--S3 for GLM-4-9B-1M and Llama-3-8B-Instruct-Gradient-1048k, with means of 86.4, 83.6, and 71.7 across the models.
Multi-query and multi-value retrieval show larger differences.
MATH-500 and AIME primarily compress generated states because their short prompts normally remain exact.
Here, the 64K prompts require compression before decoding, so the results measure retrieval from combined exact and low-rank prompt storage.

\subsection{Efficiency}\label{sec:results:latency}

\paragraph{Setup.}
We use DeepSeek-R1-Distill-Llama-8B on an NVIDIA A40 at rank 128 and batch size one.
Both methods receive identical synthetic 8K, 16K, and 32K inputs followed by exactly 1,024 greedy decode steps.
We time prefill plus decoding, excluding model loading, extension compilation, and cache inspection, and count KV memory after decoding.
Each method--length pair has one fresh-process run, executed serially without inference warmup.

\paragraph{Generation time.}
In \Cref{fig:latency}(a), \method{} takes 47.45, 58.32, and 74.70 s at 8K, 16K, and 32K, versus 38.83, 49.04, and 68.63 s for the original model.
The overhead decreases from 22.2\% to 18.9\% and 8.8\%, including updates and reconstruction.
These single-run measurements describe this workload, not a statistically stable latency difference.

\paragraph{Memory.}
In \Cref{fig:latency}(b), the original model requires 1.125, 2.125, and 4.125 GiB at 8K, 16K, and 32K; \method{} uses 0.746, 1.185, and 1.808 GiB.
Savings increase from 33.7\% to 44.2\% and 56.2\% as more history enters the low-rank representation.
This includes factors and exact protected, recent, and pending rows, preserving all positions without making memory constant in length.
It excludes indices, unused capacity, reconstructed caches, and workspace.
\FloatBarrier

\section{Conclusion}\label{sec:conclusion}

Long chain-of-thought reasoning grows the KV cache
with every generated token.
Eviction methods shrink it by deleting tokens,
but a token that looks unimportant now may matter later,
and deletion cannot be undone.
\method{} instead keeps every token position
and stores old states compactly,
while recent states stay exact.
This lowers the cost of each stored token,
though memory still grows with the length of the trace.
The difficulty is doing this online:
the compact representation adapts as new tokens arrive,
and the old states are stored relative to it.
If it changes while they stay fixed,
those states may not be reconstructed well.
\method{} updates both together with block-incremental SVD,
so every update preserves the stored history.
Our takeaway is that compressing a growing cache
is a problem of representing every token faithfully
beyond just choosing which tokens to keep.

\bibliographystyle{iclr2027_conference}
\bibliography{references}

\appendix
\section{Implementation Details and Reproducibility}\label{app:impl}

This appendix gives the settings behind the reported results.
\Cref{app:impl:method} describes the method settings,
\Cref{app:impl:eval} the evaluation protocols,
\Cref{app:fig1} the probes behind \Cref{fig:motivation},
\Cref{sec:theory} the properties of the block update,
and \Cref{app:numerical-update} further numerical diagnostics.

\subsection{Method Settings}\label{app:impl:method}

\paragraph{Main configuration.}
The key and value caches use exact recent windows of 128 rows, block size 64,
and residual tolerance \(\tau=10^{-6}\).
Each update keeps at most the given rank budget.
MATH-500 uses ranks 64, 96, and 128 on both models, with all other settings unchanged.
Reorthogonalization runs after every 256 rows processed by regular block updates;
these rows include initialization from an empty history but not the first value refit.
The persistent factors \(U_X,B_X\) and the diagonal entries of \(\Sigma_X\) are
stored in FP16; projection, residual QR, core SVD, reorthogonalization, and
reconstruction use FP32. Factors are cast back to the storage precision after each update.
Each right basis is stored explicitly, protected prompt keys are stored once,
and no additional \(\Sigma_XB_X^\top\) product is cached.
Optional features of our code that this paper does not use are disabled:
prompt sketches, persistent residual corrections, dynamic historical-token scoring,
and integer quantization.

\paragraph{Prompt protection.}
During prefill, we partition post-RoPE keys into eight-token chunks, excluding the
last four chunks and any final incomplete chunk as a local suffix. For each KV
head, we score a chunk by the minimum cosine similarity between its token keys
and their chunk mean, then select the 128 lowest-scoring chunks (or all
available chunks for a shorter prompt). We take the union of their token
positions across heads and the local suffix. These positions retain exact
pre-RoPE keys and are excluded from key factorization; all other prompt keys
are factorized. Values within 32 tokens of the protected key positions remain
protected throughout decoding. The latest 128 prompt values initialize the exact recent
window; values outside both sets are factorized. The 64K \ruler{} evaluation uses this same rule, so its
prompt is only partly represented by low-rank factors.

\paragraph{Cache layout.}
The key cache stores pre-RoPE representations. Protected prompt rows remain exact and are
excluded from the compact low-rank coefficient matrix. In the MATH protocol,
the chunk-selection rule normally covers the entire short prompt, which
therefore remains exact. Decoded keys enter an exact FIFO buffer; once it contains
$128+64$ rows, the oldest 64 are incorporated into the low-rank history.
The value cache retains protected prompt rows and a recent window of 128 rows exactly.
The value rank budget \(r_V\) also determines the number of rows used in the
initial value refit. Once \(r_V\) unprotected value rows have left the recent
window, we append them to the reconstructed prompt value history
and recompute a truncated SVD with rank budget \(r_V\). Later value updates
use 64-row blocks. Incomplete blocks remain uncompressed.
Recent and eligible unprotected rows share one exact buffer,
with eligibility determined by position. The separate \(W_X\) and \(\mathcal Q_X\)
in \Cref{alg:prefill,alg:decode} are logical index lists for these two subsets.

\paragraph{Residual basis construction.}\label{app:residual-impl}
The update schedule is determined by the number of eligible rows, independently
of the residual threshold. For a block \(F\), we first
compute \(E=F-(FB)B^\top\) and then replace it by
\(E-(EB)B^\top\) to reduce residual components along the current basis caused by finite precision.
If \(\|E\|_F\le\tau\) or \(k=d\), the residual basis is empty.
Otherwise, we compute a reduced QR factorization \(E^\top=Q_E R_E\)
and an SVD \(R_E=L_E D_E H_E^\top\).
Let \(\delta_1\ge\delta_2\ge\cdots\) denote the singular values on the diagonal of \(D_E\).
The retained residual basis is defined by
\[
 \tau_{\mathrm{eff}}=\max\!\left(\tau,\epsilon\max(b,d)\|F\|_F\right),
 \quad p=\min\!\left(d-k,\#\{i:\delta_i>\tau_{\mathrm{eff}}\}\right),
 \quad P=Q_E L_E[:,1:p],
\]
where \(\epsilon\) is machine epsilon at the computation precision and
\(\tau=10^{-6}\) by default.
The lower-right block of the core matrix is \(FP\), which equals \(EP\) when the residual is exact and
\(B^\top P=0\).

\subsection{Evaluation Protocols}\label{app:impl:eval}

\paragraph{MATH-500.}
We evaluate DeepSeek-R1-Distill-Llama-8B and Qwen3-8B in BF16 on all 500
questions. Prompts use each model's chat template and request a boxed
final answer. Each question receives one generation with temperature 0.6,
top-$p=0.95$, top-$k=50$, maximum total length 16,384, and a fixed
question-specific seed. The original model, \method{}, and R-KV use the same questions,
prompts, seeds, and answer evaluator, and every reported row covers all 500 questions.

\paragraph{Memory accounting.}
All cache numbers are persistent analytical bytes validated against the factor
layout in \eqref{eq:physical-memory}. Temporary reconstruction tensors and
model weights are excluded. Compression ratios in the main table use the original model's
output lengths. The implementation has no fused factor-aware attention kernel, and
no throughput or peak-HBM advantage is claimed.

\paragraph{R-KV matching.}
R-KV uses its released Hugging Face implementation, divide length 128, and a
128-token staging buffer. On DeepSeek-R1-Distill-Llama-8B, retained budgets 748, 562, and 409
match \method{} ranks 128, 96, and 64 at 4.060$\times$, 4.813$\times$, and
5.920$\times$ compression. Their paired accuracy advantages are 4.4, 5.4,
and 5.6 points, with question-bootstrap 95\% intervals $[1.6,7.2]$,
$[2.0,8.8]$, and $[2.2,9.0]$. On Qwen3-8B, budget 638 is matched using the original model's
mean total length of 4,317.452 tokens. Rank-96 \method{} and R-KV then give
5.6442$\times$ and 5.6410$\times$ compression; their relative mismatch is
0.056\%. R-KV staging rows are included in every calculation.

\paragraph{OjaKV evaluation.}
\Cref{tab:ojakv} reports our evaluation of the released OjaKV implementation on
DeepSeek-R1-Distill-Llama-8B and MATH-500.
The released checkpoint, configured with error-budget parameter 0.075,
retains approximately 77\% of the original model's persistent KV storage.
It answers 310/500 questions correctly; three empty outputs count as incorrect.
For higher compression, we change its aggregate ranks to 232, 192, and 152,
targeting \(4.06\times\), \(4.81\times\), and \(5.92\times\) compression.
These runs are stopped after 30, 28, and 60 generations, respectively:
every collected output repeats until the approximately 16K-token limit,
without emitting EOS. These outputs are not scored.

\paragraph{Component ablation.}
The key/value ablation and the frozen-basis ablation use a fixed prefix of 80
MATH-500 questions on DeepSeek-R1-Distill-Llama-8B, with the same prompts, seeds,
evaluator, and single-generation protocol as the main MATH-500 runs.
All compressed variants use rank 96, exact recent windows of 128 rows, block size 64,
and reorthogonalization every 256 folded rows. The frozen-basis variant initializes
both low-rank bases from the prompt, freezes them before the first decode fold,
and keeps the same exact recent windows afterwards.

\paragraph{RULER.}
The rank-96 configuration of the 64K \ruler{} evaluation uses exact recent windows of 128 rows
but an earlier update setting: folds contain 16 rows and
reorthogonalization runs every 32 folded rows. Every task has ten examples.
\Cref{tab:ruler-frozen} also lists the different update and recent-window settings
of the rank-160 configuration.

\paragraph{Factor precision at a 32K budget.}\label{app:bf16}
The persistent factors are FP16 in the MATH-500 runs. At a 32,768-token
generation budget on Qwen3-8B they overflow: entries exceed the FP16 maximum of
65,504, become $\pm\infty$, and the logits turn into NaN.
On the same question, we observed this with three versions of our code:
the version used for the main table, the latest version, and the latest version
with all speed optimizations turned off, at 13K, 19K, and 55K decoded tokens, respectively.
FP32 and BF16 factors both completed the same generation,
so the failure comes from the representation rather than from one implementation.
The original model and R-KV are unaffected because they store raw keys and values in BF16,
whose largest finite value is about $3.4\times10^{38}$, whereas \method{} stores derived
factors that are updated incrementally and grow with sequence length.

We therefore store factors in BF16 for all 32K experiments. BF16 occupies the
same two bytes as FP16, so the compression ratios are unchanged; FP32 would cut
Qwen3-8B rank-128 compression from 6.47$\times$ to about 3.8$\times$.
BF16 gives up three mantissa bits for a much larger range.
Because BF16 and FP16 factors do not give bitwise identical results, we report the 32K results
as a separate configuration and do not rerun MATH-500 in BF16.
In the one setting run in both formats, DeepSeek-R1-Distill-Llama-8B at rank 128 on AIME 2024,
accuracy is the same (56.7\%), but BF16 lengthens mean generations from 15,541 to 18,126 tokens
and doubles the share of generations that reach the length limit from 13\% to 27\%.
The lower precision therefore still changes when generation stops.

\subsection{Probe Settings}\label{app:fig1}

\paragraph{Settings of \Cref{fig:motivation}.}
Panel (a) uses the longest trace of DeepSeek-R1-Distill-Llama-8B and Qwen3-8B
on each of MATH-500, AIME 2024, and AIME 2025. At decode step 1024, tokens are
ranked by their average attention over the preceding 32 steps, and the bottom
75\% are marked. No token is removed, so this panel does not measure the
accuracy of eviction; \Cref{tab:main-results} does. A marked token counts as
strongly attended once its attention at a later step exceeds 25 times the
uniform value \(1/n\), where \(n\) is the number of visible tokens. Attention
is measured at layer 15. Panel (b) uses four 8K-token AIME 2024 decode traces
per model; singular values are divided by the largest, and medians are taken
over layers and then over traces. In panels (a) and (b), lines show two-model
means and bands show two-model ranges. Panel (c) uses the update-rule probe
below: three DeepSeek traces, layers 0, 15, and 31, rank 32, and 64-row
blocks. Drift is the relative error with respect to the represented history
before each update, not the original KV cache; it differs from the factor drift
of \eqref{eq:drift}. Dashed lines update the basis
and keep the old coordinates; solid lines update the basis and all old
coordinates jointly, which in exact arithmetic equals the \method{} update
(\Cref{prop:block-exact,prop:trunc}). Lines show means and bands span
layer-averaged traces. All bands are observed ranges, not confidence intervals.

\paragraph{Fidelity probes.}
Factor drift reconstructs the low-rank state and compares it with a fresh SVD
of the same represented history at the same rank. The update-rule probe
extracts query, key, and value features from MATH samples 138, 450, and 216 at layers 0, 15,
and 31, then compares a stale-coordinate basis update with a synchronized Brand
update using rank 32 and a 64-row block. End-to-end probes teacher-force the
original model's continuation through rank-96 and rank-64 caches and report centered
vocabulary-logit error, top-1 agreement, EOS-logit shift, and teacher-token
logit shift.

\subsection{Properties of the Block Update}\label{app:standard-props}\label{sec:theory}

We establish the reconstruction, truncation, and reorthogonalization
properties of one block update. The analysis concerns the concatenation of
the maintained low-rank history and the arriving block, as defined in
\Cref{eq:local-objective}.

\begin{proposition}[Exact block incorporation before truncation]
\label{prop:block-exact}
Assume exact arithmetic and \(A=U\Sigma B^\top\), with orthonormal columns in
\(U\) and \(B\). For a new block \(F\), let \(P\) have orthonormal columns
spanning the complete row space of \(E=F-(FB)B^\top\).
If the core \(Z\) in \Cref{eq:block-core} is factorized without truncation,
the update in \Cref{eq:sync-update} satisfies
\[
  U^+\Sigma^+(B^+)^\top
  = \begin{bmatrix}A\\F\end{bmatrix}.
\]
\end{proposition}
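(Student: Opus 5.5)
The plan is to show that the block factorization in \eqref{eq:block-core} holds with equality under the stated assumptions. Once that is done, an untruncated SVD of the core leaves the product unchanged, so the identity follows.

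First, I would show that \(F = (FB)B^\top + (FP)P^\top\). By definition \(F = (FB)B^\top + E\), so it suffices to show \(E = EPP^\top = FPP^\top\). Since \(P\) has orthonormal columns spanning the row space of \(E\), the matrix \(PP^\top\) is the orthogonal projector onto that row space, and hence \(EPP^\top = E\). Moreover, every row of \(E\) is orthogonal to the columns of \(B\), because \(EB = FB - FB(B^\top B) = 0\) using \(B^\top B = I\). It follows that \(\operatorname{span}(P) \subseteq \operatorname{span}(B)^\perp\), so \(B^\top P = 0\). Then
\[
FP = (FB)B^\top P + EP = EP,
\]
and therefore \(FPP^\top = EPP^\top = E\). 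Hence \(F = (FB)B^\top + (FP)P^\top\) exactly.

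Second, I would expand the block product directly. We have
\[
\begin{bmatrix}U&0\\0&I_b\end{bmatrix}Z[B\;P]^\top = \begin{bmatrix}U\Sigma B^\top\\ (FB)B^\top+(FP)P^\top\end{bmatrix} = \begin{bmatrix}A\\F\end{bmatrix},
\]
using \(A = U\Sigma B^\top\) for the top block and the first step for the bottom block. This establishes equality in \eqref{eq:block-core}.

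Third, I would take the SVD of \(Z\) without truncation, so that \(Z = G\Sigma^+H^\top\) holds exactly. Substituting \eqref{eq:sync-update} gives
\[
U^+\Sigma^+(B^+)^\top = \begin{bmatrix}U&0\\0&I_b\end{bmatrix}G\Sigma^+H^\top[B\;P]^\top = \begin{bmatrix}U&0\\0&I_b\end{bmatrix}Z[B\;P]^\top,
\]
which equals \([A;F]\) by the second step. I would also remark that \([B\;P]\) has orthonormal columns because \(B^\top P = 0\), and \(\operatorname{diag}(U, I_b)\) has orthonormal columns as well. Consequently \(U^+\) and \(B^+\) inherit orthonormal columns, although the stated identity does not need this.

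The main, though mild, obstacle is the first step: justifying \(FP = EP\) and \(E = EPP^\top\) requires \(B^\top P = 0\). This in turn relies on the orthonormality of \(B\) to conclude \(EB = 0\). Everything else is block matrix multiplication.
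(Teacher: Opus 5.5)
Your proof is correct and follows the paper's argument: write \(F=(FB)B^\top+(FP)P^\top\) using \(B^\top P=0\) and \(E=EPP^\top\), expand the block product, and substitute the untruncated SVD of \(Z\). You also justify \(B^\top P=0\) by deriving \(EB=0\) from \(B^\top B=I\), a step the paper only asserts.
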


\begin{proof}
Since \(B^\top P=0\) and \(E=EPP^\top\), the arriving block satisfies
\(F=(FB)B^\top+(FP)P^\top\). Hence
\[
\begin{bmatrix}A\\F\end{bmatrix}
=
\begin{bmatrix}U&0\\0&I_b\end{bmatrix}
\begin{bmatrix}\Sigma&0\\FB&FP\end{bmatrix}
[B\;P]^\top.
\]
Substituting the untruncated SVD of \(Z\) yields \Cref{eq:sync-update}.
\end{proof}

\begin{proposition}[Optimal local truncation]\label{prop:trunc}
Under the assumptions of \Cref{prop:block-exact}, let \(M=[A;F]\) and let
\(\sigma_1(Z)\ge\cdots\ge\sigma_q(Z)\) be the singular values of the core.
Let \(M_r\) be the reconstruction obtained by retaining its leading
\(\min(r,q)\) singular components. Then
\[
  \|M-M_r\|_F
  =\left(\sum_{i=r+1}^{q}\sigma_i(Z)^2\right)^{1/2}
  =\min_{\substack{\widehat M\in\mathbb R^{(n+b)\times d}\\\operatorname{rank}(\widehat M)\le r}}\|M-\widehat M\|_F.
\]
\end{proposition}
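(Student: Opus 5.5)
The plan is to transfer the Eckart--Young--Mirsky theorem from the small core \(Z\) to the tall matrix \(M\), using the factorization established in \Cref{prop:block-exact}. Write \(L=\begin{bmatrix}U&0\\0&I_b\end{bmatrix}\) and \(R=[B\;P]\). By \Cref{prop:block-exact}, \(M=LZR^\top\).

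The first step is to check that \(L\) and \(R\) have orthonormal columns. For \(L\) this follows from the block-diagonal structure and \(U^\top U=I\). For \(R\) it follows from \(B^\top B=I\), \(P^\top P=I\), and \(B^\top P=0\); the last holds because the rows of \(E\) are orthogonal to \(\operatorname{range}(B)\), so the row space of \(E\) is too.

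The key consequence is that the map \(X\mapsto LXR^\top\) is a Frobenius isometry. Indeed,
\[
\|LXR^\top\|_F^2=\operatorname{tr}(RX^\top L^\top LXR^\top)=\operatorname{tr}(X^\top X R^\top R)=\|X\|_F^2 .
\]

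\textbf{Singular values of \(M\).} Let \(Z=G_0\Sigma_0H_0^\top\) be a thin SVD. Then \(M=(LG_0)\Sigma_0(RH_0)^\top\), and both \(LG_0\) and \(RH_0\) have orthonormal columns. This is therefore a thin SVD of \(M\), so the nonzero singular values of \(M\) are exactly \(\sigma_1(Z)\ge\cdots\ge\sigma_q(Z)\), padded with zeros. Let \(Z_r\) be the truncation of \(Z\) keeping its leading \(\min(r,q)\) components. By \Cref{eq:sync-update}, the reconstruction is \(M_r=LZ_rR^\top\), which is the rank-\(\min(r,q)\) truncated SVD of \(M\) built from the lifted singular triples.

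\textbf{The two equalities.} By the isometry,
\[
\|M-M_r\|_F=\|L(Z-Z_r)R^\top\|_F=\|Z-Z_r\|_F=\Bigl(\sum_{i>r}\sigma_i(Z)^2\Bigr)^{1/2}.
\]
The minimality claim is then the Eckart--Young--Mirsky theorem applied directly to \(M\), since its singular values are those of \(Z\). Alternatively, project an arbitrary competitor \(\widehat M\) onto \(X=L^\top\widehat M R\), which has rank at most \(r\). Then \(\|M-\widehat M\|_F\ge\|L^\top(M-\widehat M)R\|_F=\|Z-X\|_F\ge\|Z-Z_r\|_F\), where the first inequality holds because orthogonal projections are contractions.

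The only delicate point is the orthonormality of \([B\;P]\), specifically that \(B^\top P=0\). It relies on \(P\) spanning exactly the row space of the residual \(E\), as assumed in \Cref{prop:block-exact}. A minor further subtlety is ties at \(\sigma_r\): the truncation is then non-unique, but the error formula is unaffected.
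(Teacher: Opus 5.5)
Your proposal is correct and takes the same route as the paper, which observes that the outer factors $\operatorname{diag}(U,I_b)$ and $[B\;P]$ have orthonormal columns, so $M$ and $Z$ share their nonzero singular values, and then invokes Eckart--Young--Mirsky. You add the details the paper leaves implicit: the check $B^\top P=0$, the Frobenius isometry, and the fact that the truncated update equals the lifted truncated SVD of $M$. Your projection argument onto the core is also a valid way to finish.
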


\begin{proof}
The outer factors in \Cref{eq:block-core} have orthonormal columns, so \(M\)
and \(Z\) have the same nonzero singular values. The result follows from
the Eckart--Young--Mirsky theorem.
\end{proof}

\paragraph{Effect of residual thresholding.}
When \(P\) retains only part of the residual row space, define
\(R_{\!\perp}=F-(FB)B^\top-(FP)P^\top\).
Assuming exact arithmetic, orthonormal columns in \(U,B,P\), and
\(B^\top P=0\), the discarded component is orthogonal to the retained
feature subspace. The update therefore satisfies
\[
 \|M-\widehat M^{+}\|_F^2
 =\|R_{\!\perp}\|_F^2+\sum_{i>r}\sigma_i(Z)^2.
\]
This follows by decomposing the error into its orthogonal components inside
and outside \(\operatorname{span}([B\;P])\).
The discarded core singular values therefore characterize the full error
only when the residual basis spans the complete residual row space. Finite-precision factors introduce further
deviations from these exact-arithmetic identities.

\paragraph{Local optimality and factor drift.}
The optimality in Proposition~\ref{prop:trunc} is local to the maintained
history and the arriving block. Since the maintained history already
incorporates earlier approximations, successive updates need not recover
the rank-\(r\) SVD of the original uncompressed history.
For a nonzero reference matrix \(X\) used to evaluate reconstruction, let
\(\widehat X_{\mathrm{inc}}\) denote the incremental approximation and
\(\widehat X_{\mathrm{fresh}}\) the direct truncated-SVD approximation at the same rank.
Their difference in relative reconstruction error is
\begin{equation}\label{eq:drift}
  \drift =
  \frac{\|\widehat X_{\mathrm{inc}}-X\|_F}{\|X\|_F}
  -
  \frac{\|\widehat X_{\mathrm{fresh}}-X\|_F}{\|X\|_F}.
\end{equation}
This quantity, referred to as factor drift, measures excess relative reconstruction
error. Logit probes against the original model and free-generation
probes evaluate the effects of cache approximation on model outputs.

\begin{proposition}[Product-preserving reorthogonalization]\label{prop:drift}
In exact arithmetic, the periodic reorthogonalization used by \method{} leaves
\(U\Sigma B^\top\) unchanged.
\end{proposition}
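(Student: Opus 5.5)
Since the paper only says reorthogonalization uses ``QR and SVD'' and preserves the product, I fix a concrete procedure and show that each of its steps is an exact refactorization of the current product. Starting from stored factors \(U\in\mathbb R^{n\times k}\), diagonal \(\Sigma\in\mathbb R^{k\times k}\), and \(B\in\mathbb R^{d\times k}\), whose columns may have lost orthonormality, I take the procedure to be:
\begin{enumerate}
\item compute reduced QR factorizations \(U=Q_UR_U\) and \(B=Q_BR_B\), with \(Q_U,Q_B\) having orthonormal columns and \(R_U,R_B\in\mathbb R^{k\times k}\);
\item form the small core \(C=R_U\Sigma R_B^\top\);
\item compute its full (untruncated) SVD \(C=G\Sigma'H^\top\);
\item set \(U'=Q_UG\), \(B'=Q_BH\), and keep \(\Sigma'\).
\end{enumerate}

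\textbf{Step 1: the product is unchanged.} In exact arithmetic,
\[
U'\Sigma'(B')^\top=Q_U\,G\Sigma'H^\top\,Q_B^\top=Q_U\,C\,Q_B^\top=Q_UR_U\,\Sigma\,R_B^\top Q_B^\top=U\Sigma B^\top .
\]
This uses only associativity and the two factorization identities \(U=Q_UR_U\) and \(B=Q_BR_B\). It mirrors the substitution argument in \Cref{prop:block-exact}: the inner factor is replaced by its exact SVD, which is an identity.

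\textbf{Step 2: the new factors have the right structure.} \(U'\) and \(B'\) have orthonormal columns, since each is a product of a matrix with orthonormal columns and a square orthogonal matrix. \(\Sigma'\) is diagonal and nonnegative. This is not needed for the claim itself, but it confirms the output is a valid \method{} state.

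\textbf{Rank-deficient case (main obstacle).} If \(U\) or \(B\) has become numerically rank deficient, \(R_U\) or \(R_B\) may be singular. In that case \(C\) has zero singular values, and an implementation may drop those components. Dropping them still leaves the product unchanged, because the discarded terms contribute exactly zero. The identity therefore holds whether the SVD of \(C\) is kept full or truncated to its nonzero singular values.

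The only caveat is if the implementation truncates at a positive tolerance or at rank \(r\). The statement's ``in exact arithmetic'' refers to the untruncated refactorization, so I restrict to that reading. Here the retained rank \(k\le r\) is already within budget, so no truncation is needed.
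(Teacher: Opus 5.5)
Your proof is correct and takes essentially the same approach as the paper: thin QR factorizations of \(U\) and \(B\), an untruncated SVD of the core \(R_U\Sigma R_B^\top\), and the observation that \(Q_UG\), \(\Sigma'\), \(Q_BH\) multiply back to \(U\Sigma B^\top\). Your extra remarks, that the new factors have orthonormal columns and that dropping exactly-zero singular values leaves the product unchanged, are sound additions that the paper does not spell out.
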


\begin{proof}
Let \(U=Q_U R_U\) and \(B=Q_B R_B\) be thin QR factorizations, and let
\(R_U\Sigma R_B^\top=\widetilde U\widetilde\Sigma\widetilde B^\top\)
be an untruncated SVD. The factors \(Q_U\widetilde U\), \(\widetilde\Sigma\), and
\(Q_B\widetilde B\) have the same product \(U\Sigma B^\top\).
In finite precision, the factors are orthonormal up to computation roundoff;
casting them to storage precision introduces additional rounding error.
\end{proof}

\begin{proposition}[Block-update cost]\label{prop:cost}
For block size \(b\), width \(d\), rank budget \(r\), and \(n\) factorized rows,
projection and residual computation require \(O(bdr)\) work.
Residual basis construction factorizes a \(d\times b\) matrix, and the core matrix
has at most \(r+b\) rows and columns. Updating the historical coefficients costs
\(O(nr(r+b))\). Persistent factor storage is \(O((n+d)r)\).
Periodic reorthogonalization adds work that depends on the number \(n\) of historical rows.
\end{proposition}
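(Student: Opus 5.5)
The plan is to account for each stage of one regular update separately and add up the costs, using only the shapes fixed in \Cref{sub:method:update}. Throughout, \(k\le r\), \(p\le\min(b,d-k)\), and \(B\in\mathbb R^{d\times k}\), \(F\in\mathbb R^{b\times d}\), \(U\in\mathbb R^{n\times k}\).

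\emph{Projection and residual.} Computing \(FB\) is a \(b\times d\) by \(d\times k\) product, which costs \(O(bdk)\subseteq O(bdr)\). Forming \((FB)B^\top\) and subtracting it from \(F\) costs the same. The second projection pass described in \Cref{app:residual-impl} repeats this and at most doubles the constant. \emph{Residual basis.} The reduced QR of \(E^\top\in\mathbb R^{d\times b}\) costs \(O(db^2)\). The SVD of \(R_E\in\mathbb R^{b\times b}\) costs \(O(b^3)\), and forming \(P=Q_E L_E[:,1:p]\) costs \(O(dbp)\). This is the claim that a \(d\times b\) matrix is factorized. \emph{Core.} By \Cref{eq:block-core}, \(Z\) has \(k+b\le r+b\) rows and \(k+p\le r+b\) columns. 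Its blocks \(\Sigma\), \(FB\) (already computed) and \(FP\) (cost \(O(bdp)\)) are assembled directly, and its SVD costs \(O((r+b)^3)\), independent of \(n\).

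\emph{Coefficient update.} By \Cref{eq:sync-update}, \(U^+\) is the block-diagonal matrix applied to \(G\in\mathbb R^{(k+b)\times r'}\) with \(r'\le r\). Its top block is \(UG_{1:k,:}\), an \(n\times k\) by \(k\times r'\) product costing \(O(nr^2)\). Its bottom block is just \(G_{k+1:k+b,:}\), which costs nothing beyond a copy. So the \(n\)-dependent cost is \(O(nr^2)\subseteq O(nr(r+b))\), which is the stated bound. Forming \(B^+=[B\;P]H\) costs \(O(d(r+b)r)\). \emph{Storage.} \Cref{eq:physical-memory} gives \(k(n+d+1)\) scalars per path for \(U\), \(B\) and \(\operatorname{diag}\Sigma\), and \(k\le r\) makes this \(O((n+d)r)\).

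\emph{Reorthogonalization.} Following the proof of \Cref{prop:drift}, the thin QR of \(U\) costs \(O(nr^2)\) and the QR of \(B\) costs \(O(dr^2)\). The small SVD costs \(O(r^3)\), and multiplying back costs \(O((n+d)r^2)\). This work is linear in \(n\), which is all the statement claims.

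None of these steps is a real obstacle. The one point needing care is that the coefficient-update bound must not pick up an \(n b\) term from the identity block. I will state explicitly that the new rows are copied rather than multiplied, and note that the stated \(O(nr(r+b))\) remains a valid, slightly loose upper bound in any case.
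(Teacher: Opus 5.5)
Your stage-by-stage count is correct and is essentially the argument the paper leaves implicit. The paper gives no separate proof of \Cref{prop:cost}; it reads the costs directly off the shapes in \eqref{eq:block-core} and \eqref{eq:sync-update}, the residual construction of \Cref{app:residual-impl}, and the QR--SVD reorthogonalization in the proof of \Cref{prop:drift}, which are the pieces you tally. Copying the identity block rather than multiplying by it gives you the sharper \(O(nr^2)\) for the historical rows. The stated \(O(nr(r+b))\) is the cost of a dense product of the \(n\times(k+b)\) block \([U\;0]\) with \(G\), so your bound implies it.
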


\subsection{Numerical Diagnostics}\label{app:numerical-update}

\paragraph{Residual threshold.}
These probes test the numerical fidelity of the update, not the gap to the original KV cache
or generation quality. With an invalid, oversized residual threshold, value drift reaches 0.961
(step 1,024, layer 0) and 0.906 (step 2,048, layer 15); the stable update gives 0.0001 and
0.0109 at the same points.

\paragraph{Drift over a long trajectory.}
We measure the drift of \eqref{eq:drift} on one AIME 2024 trajectory that never emits
EOS (8,192 decoded rows, FP16 persistent factors, reorthogonalization
every 256 folded rows). \Cref{tab:drift-8k} gives the values at step 8,192.
The key path stays within 0.007 excess error and the value path within
0.016--0.021, so the numerical cost of the incremental update is far smaller
than truncation itself on keys, and values are the more constrained of the two.

\begin{table}[!htbp]
\centering
\caption{\textbf{Factor drift at step 8,192.}
Excess relative reconstruction error \eqref{eq:drift} on one AIME 2024 trajectory,
by path, rank, and layer.}
\label{tab:drift-8k}
\small
\begin{tabular}{lccc|ccc}
  \toprule
  & \multicolumn{3}{c}{\textbf{$r=96$}} & \multicolumn{3}{c}{\textbf{$r=128$}} \\
  \textbf{Path} & \textbf{L0} & \textbf{L15} & \textbf{L31}
  & \textbf{L0} & \textbf{L15} & \textbf{L31} \\
  \midrule
  key   & 0.0024 & 0.0071 & 0.0046 & 0.0016 & 0.0065 & 0.0043 \\
  value & 0.0207 & 0.0176 & 0.0168 & 0.0194 & 0.0185 & 0.0160 \\
  \bottomrule
\end{tabular}
\end{table}

\paragraph{Reorthogonalization.}
On the same states, reorthogonalization reduces the basis orthogonality error
$\|\mU^\top\mU-\mI\|_F$ from $9.5\times10^{-3}$ to $1.1\times10^{-3}$
(layer 15, $r{=}128$; $8$--$10\times$), while the median relative change of the
represented product across reorthogonalization events is $2.4\times10^{-4}$,
consistent with \Cref{prop:drift}.
Within 8K steps, reorthogonalization does not reduce reconstruction error;
it keeps the bases orthogonal, which guards against error build-up in longer generations.

\paragraph{Per-depth spectra.}\label{app:obs-spectra}
For pre-RoPE keys, the rank \(r_{99\%}\) that keeps 99\% of the spectral energy
is not stable across depth: beyond layer 3, its coefficient of variation is 0.19
on DeepSeek-R1-Distill-Llama-8B and 0.26 on Qwen3-8B, where it spans 122--338.
The energy kept at the ranks we use is stable: at $r{=}128$ the coefficient of
variation across depth is $\le0.023$ for both key paths and $\le0.055$ for values.
This stability, not a constant \(r_{99\%}\), is what supports a single rank setting for all layers.

\section{Algorithms}\label{app:algorithms}
The prefill initialization and per-step decoding procedures
summarized in \Cref{sec:method} are specified
in \Cref{alg:prefill,alg:decode}.
The accompanying paragraphs walk through their steps
and point to the definitions and equations they instantiate.

\paragraph{Prefill initialization (\Cref{alg:prefill}).}
\Cref{alg:prefill} realizes the construction
of \Cref{sub:method:prompt}.
It first evaluates exact causal attention
over the prompt, so the prefill outputs never depend
on the compressed representation.
It then selects the protected key positions \(\mathcal P_K\)
from the chunk outlier scores \(s_{c,h}\) of \Cref{sec:method},
taking the union of the selected chunks with the local suffix
and dilating them by \(\rho\) positions to form
the protected value positions \(\mathcal P_V\);
the last \(\min(w,L)\) prompt positions initialize
the exact recent windows \(W_K\) and \(W_V\).
For each of \(K\) and \(V\),
the exact set \(\mathcal E_X=\mathcal P_X\cup W_X\)
is stored verbatim and the remaining positions
\(\mathcal C_X=\mathcal I\setminus\mathcal E_X\) are factorized
by the truncated SVD \(\operatorname{SVD}_{r_X}\) of \Cref{sec:method},
retaining \(k_X=\min(r_X,|\mathcal C_X|,d)\) components.
The pending queues \(\mathcal Q_X\) and update counters \(c_X\) start empty,
and the flag \(f_V\) records that the one-time value refit
of \Cref{sub:method:update} is still pending.

\makeatletter
\providecommand{\theHALG@line}{}
\renewcommand{\theHALG@line}{\thealgorithm.\arabic{ALG@line}}
\makeatother

\begingroup
\algrenewcommand{\algorithmicrequire}{\textbf{Input:}}
\algrenewcommand{\algorithmicensure}{\textbf{Output:}}
\algrenewcommand{\algorithmicindent}{1.2em}
\algrenewcommand{\alglinenumber}[1]{\textcolor{black!55}{\scriptsize #1}}
\algtext*{EndFor}
\newcommand{\MethodParameters}{\item[\textbf{Parameters:}]}
\begin{algorithm}[H]
\caption{\method{}: prefill initialization at one layer}
\label{alg:prefill}
\fontsize{9}{10.5}\selectfont
\begin{algorithmic}[1]
\Require Prompt queries \(Q\) and keys \(K\) before RoPE, values \(V\), positions \(\mathcal I=\{1,\ldots,L\}\)
\MethodParameters Rank budgets \(r_K,r_V\), window size \(w\), protection settings from \Cref{sub:method:prompt}
\Ensure Prefill output \(O\); cache \(\mathcal S\) with factors, exact rows, position lists, ranks, counters, and flag \(f_V\)
\State \(O\gets\operatorname{CausalAttn}(\operatorname{RoPE}_{\mathcal I}(Q),\operatorname{RoPE}_{\mathcal I}(K),V)\) \Comment{Exact prefill attention}
\State \(\mathcal P_K\gets\) union of outlier-chunk positions across heads and the local suffix \Comment{Outlier scores, \Cref{sec:method}}
\State \(\mathcal P_V\gets\{i\in\mathcal I:\exists j\in\mathcal P_K,\ |i-j|\le\rho\}\)
\State \(W_K\gets\varnothing\); \(W_V\gets\) the last \(\min(w,L)\) prompt positions \Comment{Initialize recent windows}
\For{\(X\in\{K,V\}\)}
  \State \(\mathcal E_X\gets\mathcal P_X\cup W_X\); \(\mathcal C_X\gets\mathcal I\setminus\mathcal E_X\), ordered by position
  \State \(X_{\mathrm{exact}}[\mathcal E_X]\gets X[\mathcal E_X]\)
  \State \((U_X,\Sigma_X,B_X)\gets\operatorname{SVD}_{r_X}(X[\mathcal C_X])\)
  \State \(k_X\gets\min(r_X,|\mathcal C_X|,d)\); \(\mathcal Q_X\gets\varnothing\); \(c_X\gets0\)
\EndFor
\State \(f_V\gets\mathrm{true}\) \Comment{Initial value refit is pending}
\end{algorithmic}
\end{algorithm}
\endgroup

\paragraph{Decoding step (\Cref{alg:decode}).}
\Cref{alg:decode} maintains the cache
of \Cref{sub:method:update} during generation.
For each of the key and value streams it stores
the arriving row exactly, advances the recent window,
and enqueues the unprotected positions that leave it.
A fold is triggered once the queue holds a full block:
an empty history is initialized by a truncated SVD of the block,
the pending value refit appends the oldest \(r_V\) rows
to the reconstructed value history and recomputes
a rank-\(r_V\) truncated SVD, and otherwise
the \textsc{BlockUpdate} subprocedure applies the synchronized
block-incremental update of \eqref{eq:block-core} and \eqref{eq:sync-update}.
Queue occupancy sets the update timing, the tolerance
\(\tau\) controls the residual basis of \Cref{app:residual-impl},
and the factors are reorthogonalized every \(T\) folded rows,
which preserves their product by \Cref{prop:drift}.
The folded positions move from the queue into the compressed set \(\mathcal C_X\),
and the exact set becomes \(\mathcal E_X=\mathcal P_X\cup W_X\cup\mathcal Q_X\).
The remaining steps reconstruct the cache in token order
by \eqref{eq:cache-reconstruct} and form the attention output
of \eqref{eq:reconstructed-attention} (\Cref{sub:method:complexity}).

\begingroup
\algrenewcommand{\algorithmicrequire}{\textbf{Input:}}
\algrenewcommand{\algorithmicensure}{\textbf{Output:}}
\algrenewcommand{\algorithmicindent}{1.2em}
\algrenewcommand{\alglinenumber}[1]{\textcolor{black!55}{\scriptsize #1}}
\algtext*{EndIf}
\algtext*{EndFor}
\newcommand{\MethodParameters}{\item[\textbf{Parameters:}]}
\begin{algorithm}[H]
\caption{\method{}: one decoding step at one layer}
\label{alg:decode}
\fontsize{9}{10.5}\selectfont
\begin{algorithmic}[1]
\Require Cache \(\mathcal S\), pre-RoPE query \(q_t\) and key \(k_t\), value \(v_t\), position \(t\)
\MethodParameters Rank budgets \(r_K,r_V\), window \(w\), block size \(b\), tolerance \(\tau\), reorthogonalization interval \(T\)
\Ensure Attention output \(o_t\) and updated cache \(\mathcal S\)
\For{\((X,x_t)\in\{(K,k_t),(V,v_t)\}\)} \Comment{Update the cache state}
  \State Store \(x_t\) exactly at position \(t\) and advance \(W_X\)
  \State Enqueue positions leaving \(W_X\) and outside \(\mathcal P_X\) in \(\mathcal Q_X\)
  \State \(a\gets r_V\) if \(X=V\) and \(f_V\), otherwise \(b\)
  \If{\(|\mathcal Q_X|\ge a\)}
    \State \(J\gets\) the oldest \(a\) positions in \(\mathcal Q_X\); \(F\gets X_{\mathrm{exact}}[J]\)
    \If{\(X=V\) and \(f_V\)}
      \State \((U_V,\Sigma_V,B_V)\gets\operatorname{SVD}_{r_V}([U_V\Sigma_VB_V^\top;F])\); \(f_V\gets\mathrm{false}\)
    \Else
      \State \((U_X,\Sigma_X,B_X)\gets\Call{BlockUpdate}{U_X,\Sigma_X,B_X,F;r_X,\tau}\); \(c_X\gets c_X+b\)
      \If{\(T>0\) and \(c_X\ge T\)}
        \State Reorthogonalize \((U_X,\Sigma_X,B_X)\); \(c_X\gets c_X\bmod T\) \Comment{\Cref{prop:drift}}
      \EndIf
    \EndIf
    \State \(k_X\gets\) the number of columns of \(B_X\)
    \State Append \(J\) to \(\mathcal C_X\); remove \(J\) from \(\mathcal Q_X\) and delete the corresponding exact rows
  \EndIf
  \State \(\mathcal E_X\gets\mathcal P_X\cup W_X\cup\mathcal Q_X\)
\EndFor
\Statex \textit{\(\triangleright\) Reconstruct the cache and evaluate attention}
\State Reconstruct \(\widetilde K,\widetilde V\) in token order by \Cref{eq:cache-reconstruct}
\State Restore head dimensions; apply RoPE to keys and \(q_t\) at their original positions
\State Compute \(o_t\) by \Cref{eq:reconstructed-attention}
\end{algorithmic}
\vspace{2pt}
\noindent\textit{BlockUpdate:} for an empty low-rank history, return \(\operatorname{SVD}_{r_X}(F)\). Otherwise, construct \(P\) using tolerance \(\tau\), truncate the core SVD in \Cref{eq:block-core} to rank budget \(r_X\), and return the updated factors in \Cref{eq:sync-update}.
\end{algorithm}
\endgroup

\end{document}